\definecolor{cvprblue}{rgb}{0.21,0.49,0.74}
\title{Onboarding Without Forgetting: Hypernetwork Personalization \\ with Data-Free Replay for Personalized Federated Learning}
\author{Thinh T. H. Nguyen\textsuperscript{\rm 1} \quad Le Huy Khiem\textsuperscript{\rm 2} \quad Van-Tuan Tran\textsuperscript{\rm 3} \\ Khoa D Doan\textsuperscript{\rm 1} \quad Nitesh V Chawla\textsuperscript{\rm 2} \quad Kok-Seng Wong\textsuperscript{\rm 1}\protect\thanks{Corresponding author: \url{wong.ks@vinuni.edu.vn}}\\
\textsuperscript{\rm 1}VinUniversity, Hanoi, Vietnam\\
\textsuperscript{\rm 2}University of Notre Dame, Indiana, USA\\
\textsuperscript{\rm 3}Trinity College Dublin, Dublin, Dublin, Ireland\\
{\tt\small thinh.nth@vinuni.edu.vn, kle3@nd.edu, tranva@tcd.ie}
\\
{\tt\small khoa.dd@vinuni.edu.vn, nchawla@nd.edu, wong.ks@vinuni.edu.vn}
}
\newtheorem{assumption}{Assumption}
\newtheorem{theorem}{Theorem}
\begin{document}
\maketitle

\begin{abstract}
   Federated Learning (FL) enables collaborative training across distributed clients without sharing raw data, offering strong privacy benefits. However, most methods assume all clients remain available throughout training, which is unrealistic as new clients often join over time. We study this setting, where the task and label space stay fixed but clients arrive in batches. Our analysis reveals two key challenges: updating the shared model only with new clients harms existing clients, while freezing it protects them but blocks gains from new knowledge. To capture these trade-offs, we introduce Proactive Adaptation (PA) for onboarding gains and Retroactive Improvement (RI) for changes in earlier clients without retraining. We then propose pFedDSH, which combines a central hypernetwork for personalized initialization, batch-specific binary masks for capacity preservation and allocation, and server-side data-free replay to propagate improvements without exposing client data. Experiments show that pFedDSH preserves stability for existing clients while keeping communication and adaptation costs unchanged for new clients.
\end{abstract}

\section{Introduction}
\label{sec:intro}

Personalized federated learning (pFL) handles client heterogeneity without centralizing data~\cite{kairouz2021advances, mcmahan2017communication}, but most methods assume a fixed amount of clients throughout training. This rarely holds in practice since data are partitioned across decentralized entities whose participation changes over time. In healthcare, hospitals maintain private image repositories for the same diagnostic task (e.g., pneumonia classification). New sites begin contributing months after the initial rollout, and existing sites periodically pause participation for maintenance or policy review. Both the label space and task remain constant, but the set of participating clients evolves, creating a dynamic onboarding process that standard pFL does not explicitly model~\cite{nazir2023federated}. 

\begin{figure*}[t]
\centering
\includegraphics[width=\textwidth]{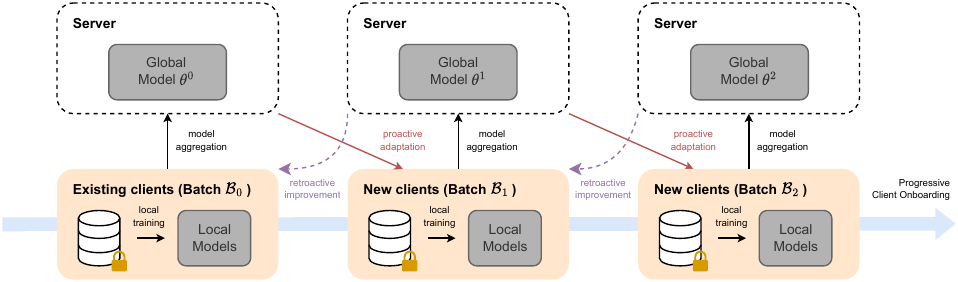}

\caption{Overview of PCO-FL. Existing clients ($\mathcal{B}_0$) retain stable performance without retraining. New clients ($\mathcal{B}_1, \mathcal{B}_2, \dots$) leverage existing knowledge and, in turn, can enhance the performance of existing clients through server-side knowledge sharing.}
\label{fig:pco_fl_scenario}
\end{figure*}

Motivated by this gap, we consider a practical and overlooked regime for pFL in which the task and label space remain fixed but clients join in onboarding steps over time, called Progressive Client Onboarding in Federated Learning (PCO‑FL). As illustrated in Figure~\ref{fig:pco_fl_scenario}, PCO‑FL organizes training into multi‑round communication within onboarding steps. At each step, a new group of clients trains locally on the fixed task and uploads updates. The server aggregates these signals to refine a shared model that can be initialized for any client. The refined model is then broadcast back, enabling both new and existing clients to benefit from collective knowledge without sharing data~\cite{mcmahan2017communication}. Conceptually, PCO‑FL sits between two paradigms: like pFL, it targets personalization under non‑IID data, but unlike typical pFL it does not assume a static client cohort; like Federated Continual Learning (FCL)~\cite{yoon2021federated, yang2024federated, wang2024federated}, it introduces temporal non‑stationarity, but unlike FCL the non‑stationarity stems from client participation changes rather than task shifts. This combination emphasizes PCO‑FL as a novel and underexplored scenario which requires examination.

We focus on two main investigations. First, we compare the progressive onboarding setup with isolated training for new clients under the same compute and initialization. We observe that new clients consistently start stronger, indicating that the server's shared model provides a useful prior at onboarding. Second, we examine the longer‑term effect when existing clients are not retrained in later steps. Two issues recur: existing clients' accuracy drops as the server adapts to new clients, and updates for new clients clash with previously specialized behaviors due to limited shared capacity and gradient misalignment.

Building on these findings, we introduce \textbf{p}ersonalized \textbf{Fed}erated Learning via \textbf{D}ata-free \textbf{S}ub-\textbf{H}ypernetwork (\textbf{pFedDSH}) for PCO-FL. It uses a server-side hypernetwork to generate personalized weights and step-specific masks that preserve allocated capacity while reusing compatible neurons and adding new ones only when updates conflict. This step-aware allocation prevents interference and protects existing clients while giving new clients a strong start. To send improvements back without retraining or sharing data, pFedDSH runs server-side data-free replay: the server synthesizes inputs and fine-tunes the hypernetwork so gains propagate to existing clients.
Empirically, pFedDSH outperforms competing baselines while matching their per-round communication time and memory footprint, and its batch-specific masks preserve neurons for future onboarding steps. In summary, our contributions are fourfold:

\begin{itemize}
\item We introduce PCO‑FL, a practical regime where tasks and label spaces remain fixed but client cohorts evolve over time. This setting bridges pFL and FCL yet exposes unique challenges that have been largely overlooked.

\item Through two investigations, we demonstrate that progressive FL system consistently gives new clients stronger starts than isolation, but also reveals two critical limitations of PCO-FL: (i) existing clients suffer accuracy drops when the shared model adapts to newcomers, and (ii) freezing the shared model protects existing clients but prevents any transfer of new knowledge.

\item We propose pFedDSH, a hypernetwork‑based framework, incorporating a batch-specific masking strategy and a data‑free replay mechanism, designed to preserve knowledge and send improvements from new clients back to existing clients.
\item Extensive experimental results demonstrate pFedDSH's improvements in PCO‑FL, outperforming prior pFL and FCL baselines across multiple datasets.
\end{itemize}

\section{Related Work}
\label{sec:related}                         
\subsection{Personalized Federated Learning}
pFL methods fall into two groups. Local personalization adapts the global model at each client (e.g., FedPer~\cite{arivazhagan2019federated}, LG-FedAvg~\cite{liang2020think}) or balances global/local via meta-learning or regularization (pFedMe~\cite{t2020personalized}, Per-FedAvg~\cite{fallah2020personalized}, Ditto~\cite{li2021ditto}). Global personalization injects client specificity during aggregation, using hypernetworks (pFedHN~\cite{shamsian2021personalized}), representation decoupling (FedRep~\cite{collins2021exploiting}), or selective fine-tuning (FedSelect~\cite{tamirisa2024fedselect}). However, FedSelect-style approaches still require continual client-side updates, whereas in PCO-FL incumbents do not retrain; our batch-specific, server-managed masks plus server-side replay preserve earlier capacity and route improvements back to them.

Recent work has also considered the case of \textit{late or novel clients}. ODPFL-HN~\cite{amosy2024late} proposes on-demand \textit{unlabeled} personalization, where a hypernetwork initiates a model for a new client using only its unlabeled data. PeFLL~\cite{scott2024pefll} likewise uses a server-side hypernetwork to generate models for unseen clients, aiming to provide strong initializations and reduce client-side optimization. Both are related to our setting but differ crucially: PCO-FL assumes supervised onboarding with labeled data for new clients and requires performance improvement for existing clients who do not retrain, which neither ODPFL-HN nor PeFLL supports.

\subsection{Federated Continual Learning}
Unlike pFL, FCL tackles sequential tasks and alleviates catastrophic forgetting under FL's communication, heterogeneity, and privacy limits~\cite{yoon2021federated, nori2025federated}. Methods such as FedWeIT (parameter partitioning)~\cite{yoon2021federated}, FedCLASS (feature-level transfer)~\cite{wu2024federated}, and HR (synthetic replay)~\cite{nori2025federated} work when tasks or labels change. PCO-FL differs in that the task remains fixed while the client population evolves. Previously served clients are frozen, making it essential to maintain their performance without local retraining, a requirement unmet by existing FCL methods but directly addressed by our masking and replay strategy.

\section{Problem Formulation}
\label{sec:problem}

We consider a trusted server and an unbounded pool of clients. Training proceeds in \textit{onboarding steps} $t=1,\dots,T$. At step $t$, a disjoint batch of new clients $\mathcal{B}_t$ joins the system, and the cumulative set of onboarded clients is
\begin{equation}
\mathcal{C}_t  =  \mathcal{C}_{t-1}\,\cup\,\mathcal{B}_t,
\qquad 
\mathcal{B}_t\cap\mathcal{C}_{t-1}=\varnothing,
\qquad 
\mathcal{C}_0=\varnothing ,
\label{eq:cumulative_clients}
\end{equation}
and we use $\mathcal{C}_{<t} = \bigcup_{\tau<t}\mathcal{B}_\tau$ for the set of existing clients before step $t$.
For any client $k\in\mathcal{C}_t$, let the private dataset be $\mathcal{D}^k=\{(x_i^k,y_i^k)\}_{i=1}^{n_k}$ with inputs $x_i^k\in\mathbb{R}^F$ and labels $y_i^k\in\mathcal{Y}$. The label space $\mathcal{Y}$ is fixed across all steps, while client participation evolves according to Equation~\ref{eq:cumulative_clients}. Let $C=|\mathcal{Y}|$ denote the number of classes.

\subsection{Training Objective and Constraints}
Let $f(\cdot;\theta)$ be a model with parameters $\theta$. Each client $k$ minimizes the empirical risk
\begin{equation}
\mathcal{L}_k(\theta) = \frac{1}{n_k}\sum_{(x,y)\in\mathcal{D}^k}\ell \left(f(x;\theta),y\right),
\label{eq:local_loss}
\end{equation}
where $\ell$ is a standard supervised loss (e.g., cross-entropy).
The \textbf{PCO-FL objective} is to minimize the aggregate loss incurred by new clients at each step:
\begin{equation}
\min_{\{\text{server/client updates over steps}\}}
\quad \sum_{t=1}^{T} \sum_{k\in\mathcal{B}_t} \mathbb{E} \left[\mathcal{L}_k \big(\theta_{k}^{(t)}\big)\right]
\label{eq:objective_constraints}
\end{equation}
such that no raw data leaves any client, no re-training for existing clients, and performance stability across batches, i.e., $\forall k\in\mathcal{C}_{<t},\ \mathrm{Acc}_k^{(t)} \ge \mathrm{Acc}_k^{(t-1)}-\varepsilon$, where $\mathrm{Acc}_k^{(t)}$ is the evaluation accuracy of client $k$ after completing step $t$, and $\varepsilon \ge 0$ is an allowable tolerance.

\subsection{Communication Protocol}
Each onboarding step $t$ comprises $R$ communication rounds; each participating client performs $E$ local epochs per round. We denote the \textit{server state} (a personalization prior) before round $r$ at step $t$ by $W_{t,r}$.

\textbf{(1) Onboarding Initialization (round $r{=}1$ at step $t$).}
The server broadcasts the current state $W_{t,1}\equiv W_{t-1, R}$ to all clients in $\mathcal{B}_t$. Previously onboarded clients $\mathcal{C}_{<t}$ \textit{do not} conduct local training at step $t$.

\textbf{(2) Local Updates (clients in $\mathcal{B}_t$).}
Each client $k\in\mathcal{B}_t$ initializes its local model from the received state and optimizes~\eqref{eq:local_loss} for $E$ epochs, e.g.,
\begin{equation}
\theta_{k}^{(r+1)}  =  \theta_{k}^{(r)}  -  \eta\,\nabla_{\theta}\,\mathcal{L}_k \left(\theta_{k}^{(r)}\right),
\label{eq:local_step}
\end{equation}
with local learning rate $\eta$. Only model updates or gradients are uploaded; raw samples are never shared.

\textbf{(3) Global Aggregation (server).}
The server aggregates updates from $\mathcal{B}_t$ to obtain the next state $W_{t,r+1}$. A standard data-size-weighted update is
\begin{equation}
W_{t,r+1}  = 
\sum_{k\in\mathcal{B}_t}\frac{n_k}{\sum_{j\in\mathcal{B}_t}n_j} \theta_{k}^{(r+1)}.
\label{eq:aggregation}
\end{equation}
Phases (2)-(3) repeat $R$ times within step $t$, yielding $W_t\equiv W_{t,R+1}$. The server then proceeds to step $t{+}1$ with a new batch $\mathcal{B}_{t+1}$. The process terminates after the final round of the last step $T$.

\subsection{Evaluation Protocol}
\label{sec:eval_protocol}
To capture the goals of PCO-FL, we report:

\textbf{Proactive Adaptation (PA).}
For step $t$, the average accuracy gain of $\mathcal{B}_t$ over a local-only baseline with the same compute budget:
\begin{equation}
\mathrm{PA}_t  =  
\frac{1}{|\mathcal{B}_t|}\sum_{k\in\mathcal{B}_t}\Big(\mathrm{Acc}_k^{(t)} - \mathrm{Acc}_{k}^{\,\mathrm{local}}\Big).
\label{eq:pa}
\end{equation}

\textbf{Retroactive Improvement (RI).}
The average accuracy gain for existing clients after integrating $\mathcal{B}_t$ \textit{without} any new local training on those clients:
\begin{equation}
\mathrm{RI}_t  =  
\frac{1}{|\mathcal{C}_{<t}|}\sum_{k\in\mathcal{C}_{<t}}\Big(\mathrm{Acc}_k^{(t)} - \mathrm{Acc}_k^{(t-1)}\Big).
\label{eq:ri}
\end{equation}

\section{Empirical Investigation}
\label{sec:empirical}

Before introducing our method, we empirically investigate the behaviour of PCO-FL around two questions: 
\textbf{Q1} does the federated system improve onboarding performance for new clients compared to training in isolation,
and \textbf{Q2} what happens to already served clients when only new clients continue training at later steps.
We use the metrics $\mathrm{PA}_t$ and $\mathrm{RI}_t$ defined in Section~\ref{sec:eval_protocol}.

\textbf{Setup.}
We consider image classification on CIFAR-10 with a fixed label space and $T{=}5$ onboarding steps.
The initial step has $80$ clients and each later step introduces $5$ new clients, following the same non-IID split and communication protocol used in Section~\ref{sec:experiments}.
At step $t$, only the new clients $\mathcal{B}_t$ perform local updates, while existing clients $\mathcal{C}_{<t}$ stay frozen.
We instantiate representative methods from three families:
traditional pFL (FedPer, FedRep, FedSelect),
late-client-joining hypernetwork methods (pFedHN, PeFLL, ODPFL-HN),
and FCL-style methods adapted to this setting (FedWeIT, FedCLASS, HR).
For each method, we compare isolation versus federation under identical initialization and compute.
Full per-step numeric results are reported in the Appendix.

\textbf{Investigation I: updating the shared model with newcomers only.}
In the first experiment, the server keeps updating the shared model at step $t$ using only updates from $\mathcal{B}_t$.
Across all methods, we consistently observe that the newcomers benefit from federation.
For every $t$, $\mathrm{PA}_t$ is positive, typically in the range $2$ to $5$ points at early steps and reaching around $3$ to $6$ points by $t{=}5$.
This confirms that the shared state acts as a useful prior for new clients, answering Q1 positively.
However, this comes at a cost for existing clients.
For all methods, $\mathrm{RI}_t$ is negative once new batches start joining, and the cumulative loss for the first batch after five steps ranges roughly from $3$ to more than $15$ points, depending on the method.
In other words, progressive updating helps new clients but consistently drifts away from the interests of incumbents.

\textbf{Investigation II: freezing the shared model.}
In the second experiment, the server freezes the shared model once the first batch has finished training.
At step $t$, the current global model is broadcast to $\mathcal{B}_t$, which fine-tune locally without sending updates back for aggregation, while $\mathcal{C}_{<t}$ remain frozen as before.
By construction, this yields $\mathrm{RI}_t{=}0$ at every step: existing clients are neither harmed nor improved.
New clients still benefit from onboarding, but less aggressively than in Investigation I.
Across methods, $\mathrm{PA}_t$ remains positive, typically about $0.7$ to $1.5$ points, because the frozen global model already encodes structure learned from earlier clients, yet cannot adapt to the new data.

In summary, the two investigations lead to a clear picture.
Progressive aggregation with newcomers (Investigation I) gives strong onboarding gains for new clients (positive $\mathrm{PA}_t$) but inevitably hurts existing clients (negative $\mathrm{RI}_t$).
Freezing the server (Investigation II) protects existing clients but leaves them unable to benefit from new knowledge ($\mathrm{RI}_t{=}0$), and onboarding gains shrink.
This answers Q1 and Q2 jointly and motivates the design of pFedDSH:
we need a mechanism that preserves the performance of earlier clients while still letting improvements from new batches flow backward, without retraining incumbents or exposing their data.

\section{Methodology}
\label{sec:method}

\subsection{Overview of pFedDSH}
Guided by the findings in Section~\ref{sec:empirical}, \textbf{pFedDSH} is built to counter the two limitations observed under PCO-FL: \textit{(i): existing clients lose accuracy} when the shared model is updated using only newcomers, and \textit{(ii) newcomer updates conflict with behaviors relied on by existing clients}. To address these, Figure~\ref{fig:pfeddsh_training_flow} summarizes pFedDSH, with three components that work together:

\begin{itemize}[leftmargin=1.35em]
\item a \textbf{central hypernetwork} that generates full personalized weights at the server from client embeddings, providing strong onboarding initializations while keeping personalization under server control;
\item \textbf{batch-specific binary masks} that allocate and preserve subnetwork capacity across onboarding steps, freezing capacity already used by existing clients and routing conflicting updates into newly allocated channels;
\item \textbf{server-side data-free replay} that synthesizes inputs aligned with the most recent step and selectively fine-tunes the hypernetwork so improvements from new clients are sent to existing clients without retraining.
\end{itemize}

\subsection{Hypernetwork and Personalization}
\label{sec:hyper}
A central hypernetwork $H(\cdot;\phi)$~\cite{shamsian2021personalized} is the natural fit for PCO-FL. First, it yields instant personalization for new clients at onboarding: given an embedding $e_c$, the server produces a model $\theta_c = H(e_c;\phi)$ that already encodes cross-step knowledge. Second, keeping personalization at the server gives a single control point to freeze capacity used by existing clients and run server-side replay. Client-side fine-tuning approaches (e.g., FedPer~\cite{arivazhagan2019federated}, and FedRep~\cite{collins2021exploiting}) store personalization locally, which prevents the server from replay and limits backward transfer.

\textbf{Formulation and communication.}
Each client $c$ is represented by an embedding $e_c \in \mathbb{R}^{d}$. The hypernetwork maps $e_c$ to a parameter vector $\theta_c = H(e_c;\phi)\in\mathbb{R}^{P}$ for the base architecture. During the onboarding step $t$, only new clients $\mathcal{B}_t$ train, and their gradients backpropagate through $\theta_c$ into $\phi$ and are aggregated at the server. This design preserves privacy, yields strong starts for new clients, and prepares the server to harmonize updates across steps.

\subsection{Batch-Specific Masking}
\label{sec:mask}
Section~\ref{sec:empirical} shows that updating the shared model using only newcomers improves onboarding but degrades existing clients, revealing a capacity conflict. We resolve this by partitioning and preserving capacity across onboarding steps via binary masks that are learned for the current step and frozen thereafter.

For each onboarding step $t$ and layer $\ell$ (with $C_\ell$ channels), we maintain learnable mask logits $z_t^{(\ell)} \in \mathbb{R}^{C_\ell}$ and obtain a soft gate via a sigmoid with scale $\gamma>0$ (we set $\gamma=5000$ in practice):
\begin{equation}
g_t^{(\ell)}  =  \sigma \big(\gamma\, z_t^{(\ell)}\big) \in [0,1]^{C_\ell}.
\label{eq:sigmoid_gate}
\end{equation}

We record the capacity allocated up to step $t$ with a cumulative (element‑wise) union:
\begin{equation}
m_{\le t}^{(\ell)}  =  \max \big(m_{\le t-1}^{(\ell)},\, g_t^{(\ell)}\big),
\quad m_{\le 0}^{(\ell)}\equiv 0,
\label{eq:cum_union}
\end{equation}
where $\max$ is taken channel‑wise. Because $g_t^{(\ell)} \in [0,1]$ and becomes near‑binary as $\gamma$ grows, $m_{\le t}^{(\ell)}$ also remains in $[0,1]$ and behaves like a binary union in practice. Let $\theta_c^{(\ell)}$ be the hypernetwork‑generated weights for client $c$, the effective masked parameters at step $t$ are
\begin{equation}
\tilde{\theta}_{c,t}^{(\ell)}  =  m_{\le t}^{(\ell)} \odot \theta_c^{(\ell)},
\quad
f_{c,t}(x) =  f \big(x;\{\tilde{\theta}_{c,t}^{(\ell)}\}_\ell\big),
\label{eq:effective_params}
\end{equation}
with $\odot$ denoting element‑wise multiplication. This allows reuse of previously allocated channels while preventing their overwrite once frozen.

To avoid destructive updates on already allocated capacity, we modulate gradients so that a connection is updated only if it involves newly active channels at step $t$. For a weight $\theta_{ij}^{(\ell)}$ (from neuron $j$ in layer $\ell - 1$ to $i$ in layer $\ell$), we apply a soft gating factor:
\begin{align}
\frac{\partial \mathcal{L}}{\partial \theta_{ij}^{(\ell)}}
 = 
\frac{\partial \mathcal{L}}{\partial \theta_{ij}^{(\ell)}}
\cdot
\max \Big(
g_t^{(\ell)}(i)\cdot \big(1-m_{<t}^{(\ell)}(i)\big), \nonumber \\
g_t^{(\ell-1)}(j)\cdot \big(1-m_{<t}^{(\ell-1)}(j)\big)
\Big),
\label{eq:grad_gate}
\end{align}
where $m_{<t}$ is the cumulative mask from previous steps. This makes gradients zero when both endpoints were already allocated before step $t$, thereby preventing overwrite.

\begin{figure}[t]
\centering
\includegraphics[width=\linewidth]{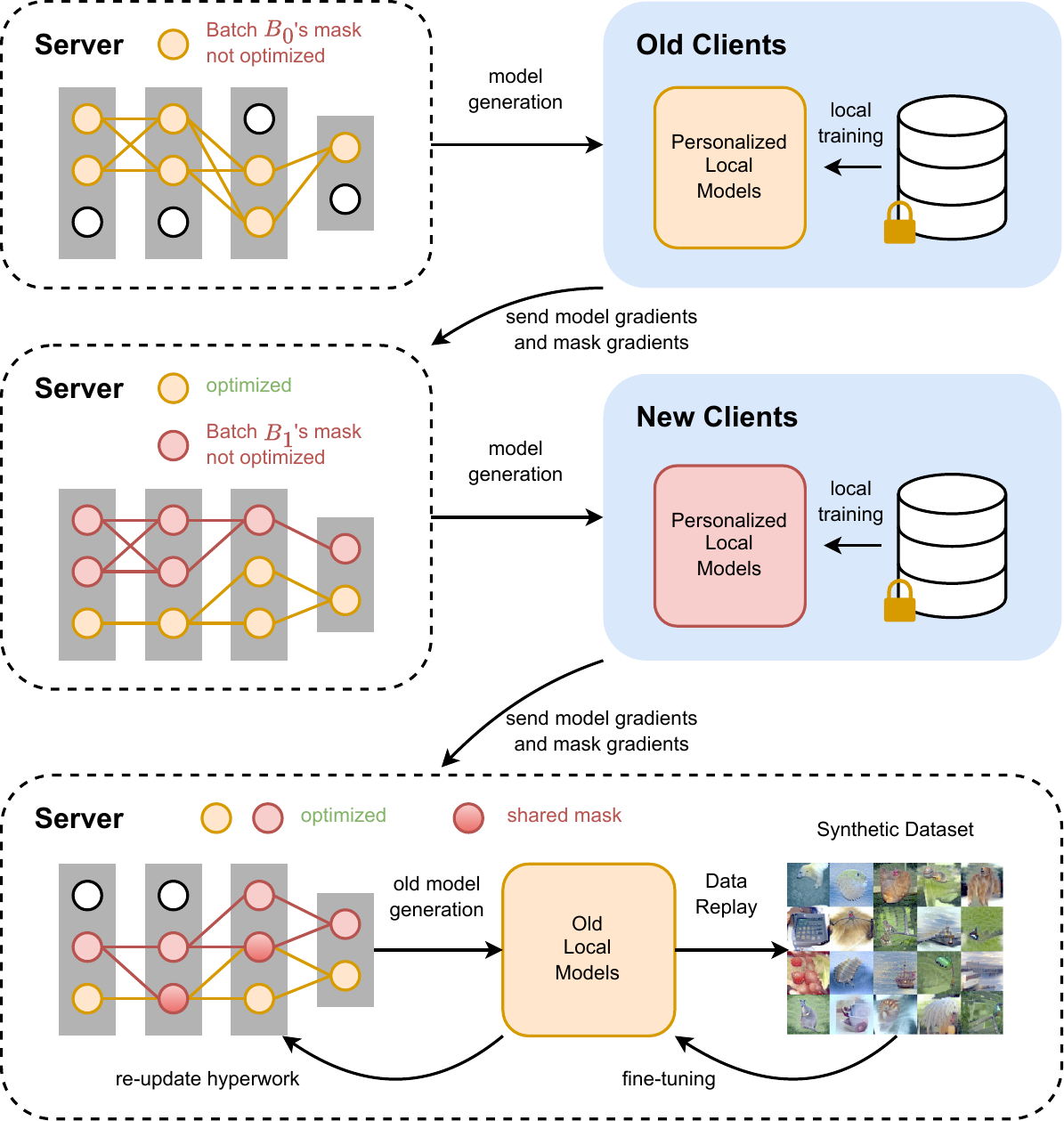}
\caption{Training flow of pFedDSH. Clients receive personalized models from the hypernetwork with batch-specific masks. Gradients from clients are aggregated at server. Synthetic images generated via DeepInversion facilitate knowledge transfer, allowing for fine-tuning of the global model without requiring additional data.}
\label{fig:pfeddsh_training_flow}
\end{figure}

\subsection{Server-side Data-free Replay}
\label{sec:replay}
Masks prevent harm but do not improve existing clients on their own. Because existing clients do not retrain and their data are unavailable, the server needs a privacy-compatible way to send useful step-$t$ information backward, and DeepInversion~\cite{yin2020dreaming} fulfills this desire.

Let $\tilde{x}$ denote a synthetic image. The first component is a \textit{feature-distribution regularizer} that matches the mini-batch mean and variance of activations at each BN layer to the corresponding running statistics recorded during client training. Let $\mu_l(\tilde{x})$ and $\sigma_l^2(\tilde{x})$ for the mean and variance of activations induced by $\tilde{x}$ at BN layer $l$, and $\{\hat{\mu}_l,\hat{\sigma}_l^2\}$ for the stored BN statistics, we define
\begin{equation}
R_{\mathrm{feature}}(\tilde{x})=
\sum_{l}(\Big\|\mu_l(\tilde{x})-\hat{\mu}_l\Big\|_2^2 + 
\Big\|\sigma_l^2(\tilde{x})-\hat{\sigma}_l^2\Big\|_2^2).
\end{equation}
To encourage piecewise-smooth natural image structure, we add a total-variation prior. Following DeepInversion's formulation, we compute the sum of norms between the image and its one-pixel shifted variants in horizontal, vertical, and diagonal directions ($d\in\{(1,0),(0,1),(1,1),(1,-1)\}$):
\begin{equation}
R_{\mathrm{TV}}(\tilde{x})
 = 
\sum_{d}
\big\|\tilde{x}-\mathrm{shift}_{d}(\tilde{x})\big\|_{2},
\end{equation}
This finite-difference prior stabilizes optimization and reduces checkerboard patterns during synthesis.

Finally, a mild pixel-norm penalty constrains global amplitude and prevents degenerate solutions:
\begin{equation}
R_{L_2}(\tilde{x})  =  \|\tilde{x}\|_2^2.
\end{equation}
In general, we solve the following optimization problem:
\begin{equation}
\begin{aligned}
\tilde{x}^{*}
 &= 
\arg\min_{\tilde{x}}
\beta_{\mathrm{feat}} R_{\mathrm{feature}}(\tilde{x})
 \\&+ 
\beta_{\mathrm{TV}} R_{\mathrm{TV}}(\tilde{x})
 + 
\beta_{L_2} R_{L_2}(\tilde{x}),
\end{aligned}
\end{equation}
where $\beta_{\mathrm{feat}},\beta_{\mathrm{TV}},\beta_{L_2} > 0$ balance distribution alignment and natural-image priors.

After synthesizing images to form the data-free Synthetic Data Pool $\mathcal{S}_t$ for the current batch $t$, we fine-tune the global model $\theta^{(t)}$ at the server side. Specifically, for each previous batch $t' < t$, we retrieve the corresponding subnetwork, defined by batch-specific neuron masks $m_{t'}$, and perform fine-tuning using the synthetic dataset $\mathcal{S}_t$:
\begin{equation}
\theta^{(t')} = \arg\min_{\theta^{(t')}} \sum_{(\tilde{x}, y) \in \mathcal{S}_t} \mathcal{L}\left(f(\tilde{x}; \theta^{(t')} \odot m_{t'}), y\right)
\label{eq:finetune}
\end{equation}
where $f(\tilde{x}; \theta^{(t')} \odot m_{t'})$ denotes the subnetwork obtained by element-wise multiplying the global parameters $\theta^{(t')}$ with the batch-specific mask $m_{t'}$, and $\mathcal{L}(\cdot, \cdot)$ is the standard task-specific loss (e.g., cross-entropy loss). By performing this fine-tuning, we selectively propagate the updated global knowledge derived from new clients backward to existing clients, thereby facilitating knowledge transfer without the need of local retraining. Finally, we summarize our method in pseudo-code in the Appendix.

\section{Experiments}
\label{sec:experiments}

Our experiments are organized around three objectives.
Our experiments are built around three goals.
(i) pFedDSH should give new clients stronger starts (high PA) while also improving, not harming, the clients that arrived earlier (positive RI) under realistic onboarding schedules.
(ii) It make good use of the network, reusing capacity across batches instead of carving out a separate subnetwork for each one.
(iii) It achieve these benefits without paying a large price in communication cost or server runtime.

\subsection{Experimental Setup}
\label{sec:exp_setup}

\textbf{Datasets and PCO-FL schedules.}
We evaluate on CIFAR-10, CIFAR-100~\cite{krizhevsky2009learning}, and Tiny-ImageNet~\cite{le2015tiny}.
In all cases, the task and label space are fixed while clients arrive over time.
We construct $100$ clients with a Dirichlet label partition ($\alpha=0.1$) and use the same non-IID split for training and test.
The first onboarding step serves $80$ clients.
We then consider three schedules:
(i) \textbf{2 batches} (one additional batch of $20$ clients),
(ii) \textbf{5 batches} (four additional batches of $5$ clients),
and (iii) \textbf{11 batches} (ten additional batches of $2$ clients).
PA and RI are computed as in Section~\ref{sec:eval_protocol}.

\textbf{Baselines.}
We compare against three families.
(i) \emph{Traditional pFL:} FedPer, FedRep, FedSelect, which use shared backbones with client-specific heads or decoupled representations.
(ii) \emph{Late-client-joining hypernetworks:} pFedHN, PeFLL, ODPFL-HN, which explicitly generate models for unseen clients but do not target backward improvement.
(iii) \emph{FCL-style methods:} FedWeIT, FedCLASS, HR, adapted to PCO-FL as strong continual-learning baselines.
FedAvg~\cite{mcmahan2017communication} is included as a standard FL reference.

\textbf{Training protocol.}
Unless otherwise specified, the first batch is trained for $200$ communication rounds and each subsequent batch for $100$ rounds.
At each round, we sample $5\%$ of currently active clients.
Clients train locally for one epoch per round using SGD with momentum $0.9$, batch size $32$, and an initial learning rate $0.01$ with cosine decay.
Local-only baselines use the same initialization and local compute as the onboarding runs, with models reset at joining time.
Architectural details, DeepInversion configuration, and hardware are given in the Appendix.

\subsection{Positive PA and RI}
\label{sec:exp_obj1}

We first examine whether pFedDSH simultaneously improves new clients (PA) and existing clients (RI) under different onboarding schedules.

\begin{table}[t]
\centering
\footnotesize
\caption{RI (existing clients) and PA (new clients) for the last onboarding batch under PCO-FL with 2, 5, and 11 batches.
RI is the change in accuracy of existing clients before and after onboarding the last batch; PA is the gain of the last batch over isolated training.
Positive RI indicates retroactive improvement.
Means over five seeds.
Full per-batch tables are in the Appendix.}
\label{tab:acc}
\resizebox{\linewidth}{!}{
\begin{tabular}{llrrrrrr}
\toprule
\multirow{2}{*}{Type} & \multirow{2}{*}{Method} 
 & \multicolumn{2}{c}{2 batches} 
 & \multicolumn{2}{c}{5 batches} 
 & \multicolumn{2}{c}{11 batches} \\ 
\cmidrule(lr){3-4} \cmidrule(lr){5-6} \cmidrule(lr){7-8}
 & & RI & PA & RI & PA & RI & PA \\ \midrule
\multicolumn{8}{c}{\textbf{CIFAR-10}} \\ \midrule
FL     & FedAvg      & -18.56 & -13.21 & -22.50 & -14.00 & -26.00 & -15.00 \\ \cmidrule(lr){1-8}
\multirow{3}{*}{pFL} 
       & FedPer      & -7.09  & 12.40  & -9.09  & 11.60  & -10.59 & 11.00  \\
       & FedRep      & -11.68 & 13.00  & -13.68 & 12.20  & -15.18 & 11.60  \\
       & FedSelect   & -6.13  & 14.29  & -8.13  & 13.49  & -9.63  & 12.89  \\ \cmidrule(lr){1-8}
\multirow{3}{*}{LCJ} 
       & pFedHN      & -9.73  & 13.16  & -12.73 & 12.16  & -14.73 & 11.16  \\
       & PeFLL       & -12.19 & 15.41  & -15.19 & 14.41  & -17.19 & 13.41  \\
       & ODPFL-HN    & -6.27  & 14.13  & -9.27  & 13.13  & -11.27 & 12.13  \\ \cmidrule(lr){1-8}
\multirow{3}{*}{FCL} 
       & FedWeIT     & -6.26  & 10.50  & -7.76  & 9.50   & -8.76  & 8.70   \\
       & FedCLASS    & -5.97  & 11.15  & -7.47  & 10.15  & -8.47  & 9.35   \\
       & HR          & -6.08  & 11.40  & -7.58  & 10.40  & -8.58  & 9.60   \\ \cmidrule(lr){1-8}
\textbf{Ours} 
       & \textbf{pFedDSH} 
                     & \textbf{3.80}  &  \textbf{14.53} 
                     & \textbf{5.10}  &  \textbf{14.00} 
                     & \textbf{6.20}  &  \textbf{13.50} \\ \midrule
\multicolumn{8}{c}{\textbf{CIFAR-100}} \\ \midrule
FL     & FedAvg      & -9.99  & -16.06 & -14.00 & -17.00 & -18.00 & -18.00 \\ \cmidrule(lr){1-8}
\multirow{3}{*}{pFL} 
       & FedPer      & -6.17  & 3.50   & -9.17  & 2.70   & -12.17 & 1.90   \\
       & FedRep      & -10.89 & -3.26  & -13.89 & -4.06  & -16.89 & -4.86  \\
       & FedSelect   & -7.56  & 7.59   & -10.56 & 6.79   & -13.56 & 5.99   \\ \cmidrule(lr){1-8}
\multirow{3}{*}{LCJ} 
       & pFedHN      & -22.48 & 7.85   & -26.48 & 6.65   & -30.48 & 5.45   \\
       & PeFLL       & -19.23 & 4.59   & -23.23 & 3.39   & -27.23 & 2.19   \\
       & ODPFL-HN    & -7.90  & 7.10   & -11.90 & 5.90   & -15.90 & 4.70   \\ \cmidrule(lr){1-8}
\multirow{3}{*}{FCL} 
       & FedWeIT     & -4.43  & 1.56   & -6.43  & 1.16   & -8.43  & 0.76   \\
       & FedCLASS    & -4.07  & 1.81   & -6.07  & 1.41   & -8.07  & 1.01   \\
       & HR          & -3.94  & 1.96   & -5.94  & 1.56   & -7.94  & 1.16   \\ \cmidrule(lr){1-8}
\textbf{Ours} 
       & \textbf{pFedDSH} 
                     & \textbf{3.50}  & \textbf{8.37} 
                     & \textbf{4.22}  & \textbf{7.88} 
                     & \textbf{5.02}  & \textbf{7.39} \\ \midrule
\multicolumn{8}{c}{\textbf{Tiny-ImageNet}} \\ \midrule
FL     & FedAvg      & -0.09  & -9.31  & -0.99  & -9.80  & -1.99  & -10.30 \\ \cmidrule(lr){1-8}
\multirow{3}{*}{pFL} 
       & FedPer      & -3.31  & -0.83  & -4.31  & -1.13  & -5.31  & -1.43  \\
       & FedRep      & -2.13  & -1.96  & -3.13  & -2.26  & -4.13  & -2.56  \\
       & FedSelect   & -3.30  & 2.00   & -4.30  & 1.70   & -5.30  & 1.40   \\ \cmidrule(lr){1-8}
\multirow{3}{*}{LCJ} 
       & pFedHN      & -9.81  & -1.08  & -11.81 & -1.58  & -13.81 & -2.08  \\
       & PeFLL       & -3.20  & -1.00  & -4.20  & -1.30  & -5.20  & -1.60  \\
       & ODPFL-HN    & -3.20  & -0.31  & -4.20  & -0.61  & -5.20  & -0.91  \\ \cmidrule(lr){1-8}
\multirow{3}{*}{FCL} 
       & FedWeIT     & -1.50  & -0.57  & -2.20  & -0.77  & -2.90  & -0.97  \\
       & FedCLASS    & -1.28  & -0.39  & -1.98  & -0.59  & -2.68  & -0.79  \\
       & HR          & -1.55  & -0.48  & -2.25  & -0.68  & -2.95  & -0.88  \\ \cmidrule(lr){1-8}
\textbf{Ours} 
       & \textbf{pFedDSH} 
                     & \textbf{1.80}  & \textbf{3.27} 
                     & \textbf{3.20}  & \textbf{3.00} 
                     & \textbf{5.60}  & \textbf{5.13} \\ \bottomrule
\end{tabular}
}
\end{table}

\textbf{Summary across datasets and schedules.}
Table~\ref{tab:acc} reports RI for existing clients and PA for newly onboarded clients in all three schedules.
Across datasets, all baselines suffer negative RI once later batches are introduced, and existing clients steadily lose accuracy as the system continues to onboard.
In contrast, pFedDSH is the only method that achieves consistently positive RI while also preserving strong PA.
Moreover, its RI tends to increase as more batches arrive: when additional clients join, replay has more diverse updates to consolidate, so existing clients benefit more from later onboarding steps instead of being harmed by them.
At the same time, onboarding clients still obtain a clear advantage over training in isolation, showing that these backward gains do not come at the expense of their own performance.

\textbf{Per-batch trajectories.}
Figure~\ref{fig:acc} shows the average test accuracy per batch in the 5-batch setting.
Traditional pFL and FCL baselines exhibit a clear downward trend for early batches once later clients start training.
Figure~\ref{fig:acc_first} makes this explicit by tracking the \emph{first} batch across 11 onboarding steps: for most baselines, the first batch deteriorates steadily as more batches are added, whereas under pFedDSH it remains almost flat and can even gain a small improvement after replay.
We sometimes see a small decrease for pFedDSH in a few batches, which is due to the masks being almost, reflecting that masks are near-binary rather than perfectly binary, making tiny gradient leakage can occur.
However, this effect is small compared to the degradation observed for other methods.

\begin{figure}[h]
    \centering
    \includegraphics[width=\linewidth]{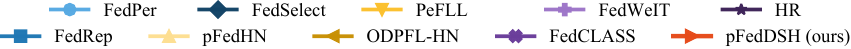}\\
    \vspace{0.2cm}
    \includegraphics[width=0.8\linewidth]{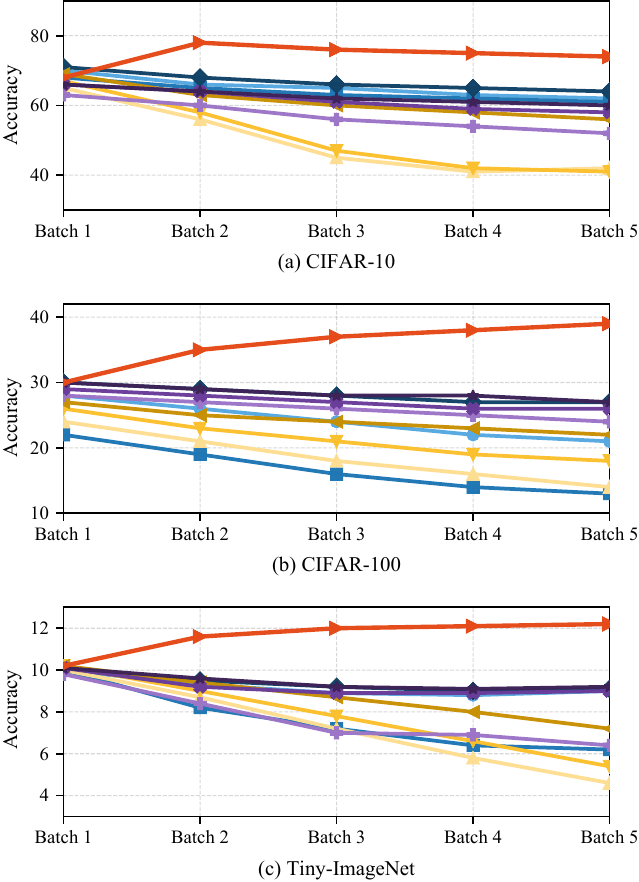}
    \caption{Average test accuracy per batch in the 5-batch setting.
    pFedDSH keeps earlier batches stable (and can slightly improve them) as new batches arrive, while baselines typically degrade earlier batches as onboarding continues.}
    \label{fig:acc}
\end{figure}

\begin{figure}[t]
  \centering
  \begin{subfigure}[t]{0.8\linewidth}
    \centering
    \includegraphics[width=\linewidth]{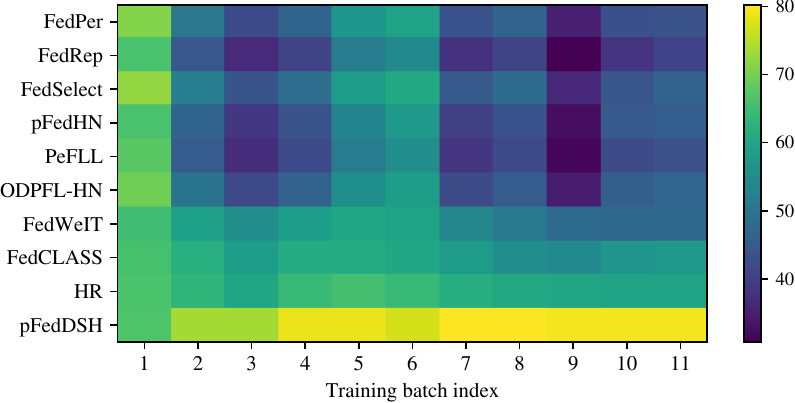}
    \caption{CIFAR-10}
    \label{fig:acc_first:c10}
  \end{subfigure}
  \hfill
  \begin{subfigure}[t]{0.8\linewidth}
    \centering
    \includegraphics[width=\linewidth]{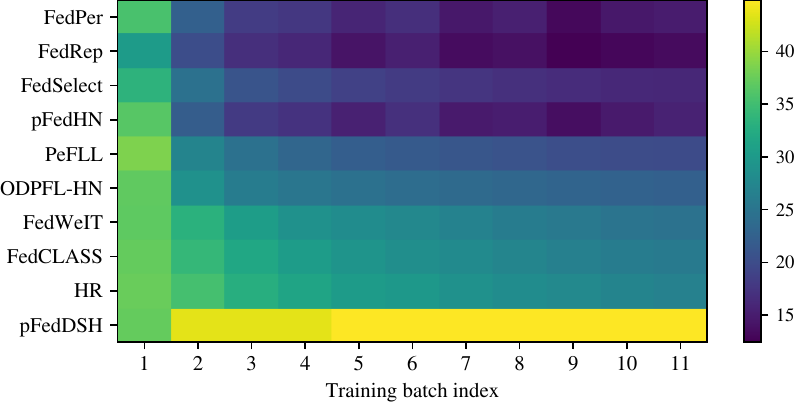}
    \caption{CIFAR-100}
    \label{fig:acc_first:c100}
  \end{subfigure}
  \hfill
  \begin{subfigure}[t]{0.8\linewidth}
    \centering
    \includegraphics[width=\linewidth]{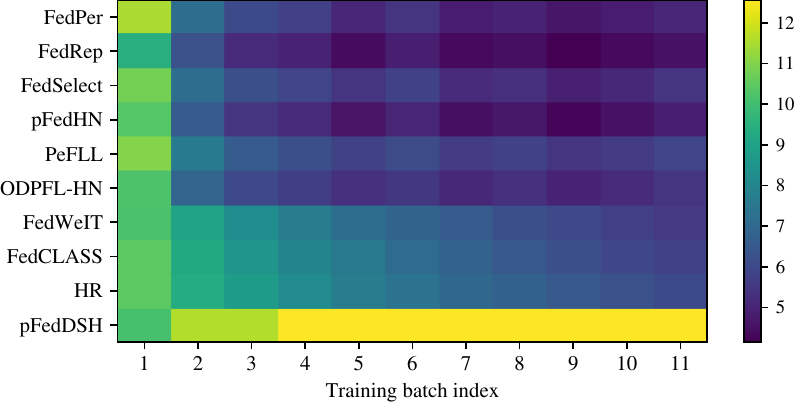}
    \caption{Tiny-ImageNet}
    \label{fig:acc_first:tiny}
  \end{subfigure}
  \vspace{0.4em}
  \caption{Accuracy of the first batch measured after each subsequent onboarding batch (up to 11).
  pFedDSH maintains near-constant performance for the first batch and can even improve it via server-side replay, whereas competing methods show gradual degradation over time.}
  \label{fig:acc_first}
\end{figure}

\textbf{Ablation study.}
To understand which components matter, we ablate batch-specific masking and data-free replay on CIFAR-10 with five batches (Table~\ref{tab:ablation_components}).
Removing \textbf{neuron masking} makes RI clearly negative and substantially lowers accuracy, behaving similarly to standard hypernetworks where later updates overwrite earlier ones.
Removing \textbf{data-free replay} keeps existing clients relatively stable but largely removes backward gains, confirming that replay is what sends improvements back to incumbents.
Turning off both masking and replay gives the weakest configuration.
Taken together with Table~\ref{tab:acc}, this indicates that masks provide stability, while server-side replay turns that stability into consistently positive RI instead of leaving it near zero.

\begin{table}[ht]
\centering
\caption{Influence of individual components on average accuracy, PA, and RI after training on CIFAR-10 (5 batches).}
\label{tab:ablation_components}
\resizebox{0.473\textwidth}{!}{
\begin{tabular}{lccc}
\toprule
Variant & Accuracy (\%) & PA (\%) & RI (\%) \\
\midrule
\textbf{Full pFedDSH}          & \textbf{68.40} & 1.47 & \textbf{2.12}\\
  w/o neuron masking           & 51.22 & 1.55 & -14.31\\
  w/o data-free replay         & 64.15 & 1.42 & -0.92\\
  w/o masking \& replay        & 50.97 & 1.53 & -15.23\\
\bottomrule
\end{tabular}
}
\end{table}

\begin{figure}[ht]
    \centering
    \includegraphics[scale=0.57]{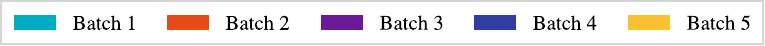}\\
    \vspace{0.3cm}
     \begin{subfigure}[b]{0.3\linewidth}
         \centering
         \includegraphics[width=\linewidth]{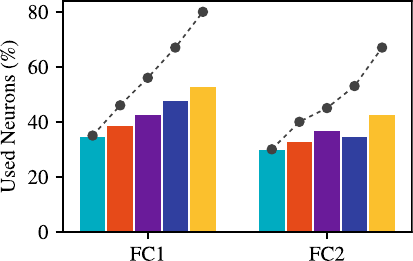}\\
         \caption{CIFAR-10}
         \label{fig:neuron:cifar10}
     \end{subfigure}
     \hfill
     \begin{subfigure}[b]{0.3\linewidth}
         \centering
         \includegraphics[width=\linewidth]{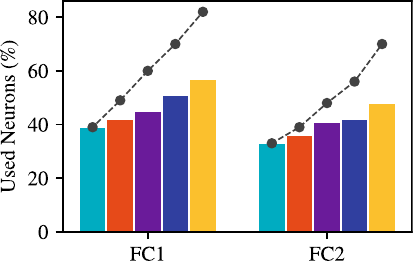}\\
         \caption{CIFAR-100}
         \label{fig:neuron:cifar100}
     \end{subfigure}
     \hfill
     \begin{subfigure}[b]{0.3\linewidth}
         \centering
         \includegraphics[width=\linewidth]{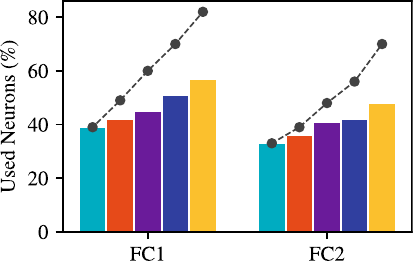}\\
         \caption{Tiny-ImageNet}
         \label{fig:neuron:tinyimg}
     \end{subfigure}
        \caption{Layer-wise neuron usage through the training process. Dashed lines show total active neurons; colored curves show per-batch usage. pFedDSH gradually increases total capacity while reusing a large fraction of earlier neurons.}
        \label{fig:neuron}
\end{figure}

\subsection{Efficient Neuron Capacity Usage}
\label{sec:exp_obj2}

We next examine whether pFedDSH simply allocates a separate subnetwork per batch or actually reuses neurons across onboarding steps.
Figure~\ref{fig:neuron} tracks layer-wise neuron usage over 5 batches for all three datasets, focusing on the fully connected layers of LeNet-5 for clarity.
First, the total number of active neurons (dashed line) does increase as more batches arrive, but the growth is gradual and far below what a “one independent subnetwork per batch” strategy would require, indicating that pFedDSH does not rely on disjoint partitions.
Second, the per-batch curves show increasing reuse: as new batches join, their masks activate many neurons that were already used by previous batches and only a smaller subset of newly allocated channels.
On CIFAR-10 and CIFAR-100, up to roughly $40\%$ of channels are reused across batches while maintaining accuracy; Tiny-ImageNet follows the same trend with slightly higher total usage due to the larger label space.
This behaviour matches the structure of PCO-FL: although client cohorts change, all batches tackle the same classification task, therefore the masking mechanism prefers to reuse compatible neurons and only allocates new capacity when incoming gradients introduce novel directions.

\subsection{Communication Efficiency}
\label{sec:exp_obj3}

Finally, we evaluate whether the benefits of pFedDSH come with acceptable efficiency in terms of server latency and communication cost.

\textbf{Server latency and trade-off.}
Table~\ref{tab:latency_vs_benefit} reports server-side latency per round on CIFAR-10 together with RI.
Shared-backbone pFL and hypernetwork baselines have comparable per-round costs.
HR is the only method that increases latency, since it runs replay at every round while still yielding negative RI.
pFedDSH has a similar base cost to other hypernetworks, but runs replay only once per batch, so the amortized overhead remains modest.
Under this latency budget, pFedDSH is the only method that attains positive RI, giving a clear trade-off: slightly more server work, but a qualitatively different outcome for existing clients.

\begin{table}[t]
\centering
\footnotesize
\caption{Server latency and RI on CIFAR-10 (5 batches).
``Ordinary round'' is the per-round cost without replay.
For HR, replay runs every round; for pFedDSH, replay runs once per batch and is amortized into the averaged cost.}
\label{tab:latency_vs_benefit}
\resizebox{\linewidth}{!}{
\begin{tabular}{lcccc}
\toprule
Method &
\begin{tabular}[c]{@{}c@{}}Ordinary\\ round (s)\end{tabular} &
\begin{tabular}[c]{@{}c@{}}Replay\\ latency (s)\end{tabular} &
\begin{tabular}[c]{@{}c@{}}Averaged\\ per round (s)\end{tabular} &
RI (\%) \\ \midrule
FedPer     & 0.32 & - & 0.32 & -14.23 \\
FedRep     & 0.32 & - & 0.32 & -15.47 \\
FedSelect  & 0.33 & - & 0.33 & -11.06 \\ \midrule
pFedHN     & 0.38 & - & 0.38 & -9.73  \\
PeFLL      & 0.39 & - & 0.39 & -10.66 \\
ODPFL-HN   & 0.40 & - & 0.40 & -11.77 \\ \midrule
FedWeIT    & 0.36 & - & 0.36 & -6.58 \\
FedCLASS   & 0.37 & - & 0.37 & -5.81 \\
HR         & 0.31 & 2.21 (every round) & 2.52 & -3.90 \\ \midrule
\textbf{pFedDSH} & 0.38 & 8.00 (once per batch) & \textbf{0.42} & \textbf{+2.12} \\ \bottomrule
\end{tabular}
}
\end{table}

\textbf{Communication per round.}
Table~\ref{tab:mem-per-round} reports the per-round communication payload per active client.
pFedDSH is bandwidth-equivalent to other hypernetwork-based methods and very close to traditional pFL, differing by only a few kilobytes per client per round from pFedHN and PeFLL.
Replay is implemented entirely on the server and does not introduce any additional client-side communication.
Overall, pFedDSH delivers positive RI and strong PA under PCO-FL with essentially unchanged communication cost and a modest, well-controlled latency overhead, which keeps it within a practical regime for real deployments.

\begin{table}[t]
\centering
\footnotesize
\caption{Per-round communication per active client.
S$\to$C and C$\to$S are server-to-client and client-to-server payloads.}
\label{tab:mem-per-round}
\resizebox{0.47\textwidth}{!}{
\begin{tabular}{lccc}
\toprule
\textbf{Method} & \textbf{S$\to$C (KiB)} & \textbf{C$\to$S (KiB)} & \textbf{Total (KiB)} \\ \midrule
FedPer          & 238.89                 & 238.89                 & 477.78               \\
FedRep          & 238.89                 & 238.89                 & 477.78               \\
FedSelect       & 238.89                 & 238.89                 & 477.78               \\ \midrule
pFedHN          & 242.21                 & 242.71                 & 484.92               \\
PeFLL           & 242.21                 & 242.71                 & 484.92               \\
ODPFL-HN        & 242.21                 & 242.71                 & 484.92               \\ \midrule
FedWeIT         & 240.00                 & 240.00                 & 480.00               \\
FedCLASS        & 241.00                 & 241.00                 & 482.00               \\
HR              & 242.21                 & 314.70                 & 556.91               \\ \midrule
pFedDSH         & 242.21                 & 243.59                 & 485.80               \\ \bottomrule
\end{tabular}
}
\end{table}

\section{Conclusion}
\label{sec:conclusion}

In this paper, we studied the PCO-FL scenario, where tasks remain fixed but client cohorts evolve over time, and introduced PA and RI to quantify onboarding benefits and retroactive effects.
Our empirical analysis showed that progressive federated training gives new clients stronger starts than isolation, but, without explicit consolidation, existing clients either degrade or remain unchanged.
To address this, we proposed \textbf{pFedDSH}, a hypernetwork-based framework with batch-specific masking to preserve earlier knowledge and a server-side data-free replay mechanism to route improvements backward without exposing client data.
Experiments on multiple benchmarks demonstrate that pFedDSH improves both new and existing clients over state-of-the-art pFL and FCL methods, while reusing network capacity across onboarding steps and maintaining competitive communication and latency profiles.

\textbf{Privacy considerations.}
Unlike HR, which transmits latent codes through pretrained local decoders, pFedDSH synthesizes images purely on the server from aggregated BatchNorm statistics via DeepInversion~\cite{yin2020dreaming} and applies replay only on server-side embeddings.
Prior work indicates that reconstructions from such statistics are weaker than gradient-based attacks, making this form of replay comparatively more privacy-friendly.
Combined with the fact that raw client data and sample-level gradients never leave devices, pFedDSH offers a practical route to activate knowledge transfer in PCO-FL without introducing data leakage.

\section*{Acknowledgement}

This work was supported by the project ``Privacy-Preserving, Robust, and Explainable Federated Learning Framework for Healthcare System'' (Grant No. 8665) from the VinUni-Illinois Smart Health Center (VISHC), VinUniversity.

{
    \small
    \bibliographystyle{ieeenat_fullname}
    \bibliography{main}
}

\clearpage
\appendix

\begin{algorithm}[t]
\caption{\textbf{pFedDSH}: Personalized Federated Data-free Sub-Hypernetwork for PCO-FL}
\label{algo:pfeddsh}
\DontPrintSemicolon
\SetKwInOut{Input}{Input}\SetKwInOut{Output}{Output}

\Input{global hypernetwork $H(\cdot;\phi)$, total batches $T$, global epochs $R$, local epochs $E$, learning rates $\eta,\alpha$, masking regularizer $\lambda$.}
\Output{Personalized global hypernetwork $H(\cdot;\phi^{(T)})$, masks $\{m_t\}_{t=1}^{T}$.}

\For{$t \leftarrow 1$ \KwTo $T$}{

\For{$r \leftarrow 1$ \KwTo $R$}{

{\textcolor{red}{\tcp*[l]{\footnotesize \textbf{Step 1: Embedding Transmission}}}}
\ForEach{client $c \in \mathcal{B}_t$}{
    Client initializes local embedding $e_c$ and sends $(e_c, t)$ to server.
}

{\textcolor{red}{\tcp*[l]{\footnotesize \textbf{Step 2: Model Generation}}}}
Server initializes or retrieves mask $m_t$.

Server generates personalized masked model for each client $\theta_c^{(t,0)} \leftarrow H(e_c;\phi)\odot m_t$ and sends it to corresponding clients $c\in\mathcal{B}_t$.

{\textcolor{red}{\tcp*[l]{\footnotesize \textbf{Step 3: Local Client Training}}}}
\ForEach{client $c \in \mathcal{B}_t$}{
    Train locally for $E$ epochs:
    $
    \theta_c^{(t)} \leftarrow \theta_c^{(t)} - \eta\nabla_{\theta_c^{(t)}} \mathcal{L}(\theta_c^{(t)}, \mathcal{D}_c)
    $.

    Send updated gradients $\nabla_{\phi}^{(c)}$ and mask gradients $\nabla_{m_t}^{(c)}$ to the server.
}

{\textcolor{red}{\tcp*[l]{\footnotesize \textbf{Step 2: Server Aggregation}}}}
Update global hypernetwork and mask:
$
\phi^{(t)} \leftarrow \phi^{(t-1)} - \alpha\sum_{c\in \mathcal{B}_t}\nabla_{\phi}^{(c)}$,
$
m_t \leftarrow m_t - \alpha\sum_{c\in \mathcal{B}_t}\nabla_{m_t}^{(c)}$.

}

{\textcolor{red}{\tcp*[l]{\footnotesize \textbf{Step 3: Replay via DeepInversion}}}}
\If{$t>1$}{
Generate synthetic dataset $\mathcal{S}_t$.

Fine-tune previous batches' subnetworks with synthetic data.
}

}
\end{algorithm}

\section{Implementation Details}
\label{sec:implementation_details}

\textbf{Model Architectures.}
Our experiments utilize \textbf{LeNet-5}~\cite{lecun2002gradient} as the backbone for all client-side networks. The final fully connected layer is adjusted to the number of classes in each dataset: $10$ for CIFAR-10, $100$ for CIFAR-100, and $200$ for Tiny-ImageNet. \textbf{The global hypernetwork} is a two-layer Multi-Layer Perceptron (MLP) that maps client embeddings of dimension $d=32$ through a hidden layer of $512$ neurons to generate the complete personalized parameter vector of LeNet-5. For generating client embeddings, we employ a lightweight convolutional neural network (CNN) consisting of one convolutional layer ($32$ filters, $3\times3$ kernels), followed by BN, a ReLU activation, global average pooling, and a final linear transformation projecting to the $32$-dimensional embedding space.

\textbf{Training Procedure.}
Federated training is conducted over $200$ communication rounds for initial client batches and an additional $100$ rounds for each new client batch. Each communication round randomly selects $5\%$ of currently available clients. Selected clients perform local training for one epoch per round using Stochastic Gradient Descent (SGD) with momentum set to $0.90$, a batch size of $32$, and an initial learning rate of $0.01$ following a cosine decay schedule throughout training. Clients transmit their embedding vectors, gradients with respect to hypernetwork parameters $\phi$, and batch-specific mask gradients to the server. No raw images, labels, or sample-specific gradients are exchanged. We summarize our method in the Algorithm \ref{algo:pfeddsh}.

\textbf{DeepInversion-based Data-free Replay.}
Synthetic images for server-side replay are generated via DeepInversion using BN statistics from the client models. The image synthesis optimization runs for $250$ iterations per image, with an initial learning rate of $0.1$. The feature alignment losses are weighted by the following coefficients: total variation regularization $\beta_{\text{TV}}=10^{-5}$, $L_2$ regularization $\beta_{L_2}=10^{-4}$, and feature distribution alignment $\beta_{\text{feature}}=10^{-2}$. To balance computational cost and diversity, the synthetic dataset per class per batch consists of $20$ images for CIFAR-10~\cite{krizhevsky2009learning}, $5$ images for CIFAR-100~\cite{krizhevsky2009learning}, and $3$ images for Tiny-ImageNet~\cite{le2015tiny}. Generated data are utilized solely for fine-tuning server-side global models.

\textbf{Hardware and Computational Resources.}
All reported computational performance measurements, including server-side latency, are obtained by averaging over $200$ federated training rounds executed on a single NVIDIA RTX A5000 GPU with $24$ GB of VRAM. Experiments are conducted on a computational node equipped with a 32-core AMD CPU running at $3.0$ GHz and $128$ GB of RAM. For fairness in comparison, no multi-GPU or distributed training optimizations are employed.



\section{Sensitivity Analysis}

\textbf{Hyper-parameters tuning.} We analyze two key factors in our server-side replay: (i) the weights of the DeepInversion objective (\(\beta_{\mathrm{feat}}, \beta_{\mathrm{TV}}, \beta_{L_2}\)), and (ii) the per-step synthetic data budget. Unless stated otherwise, results are on \textbf{CIFAR-10} with \textbf{5 onboarding steps}. We report PA for new clients, RI for existing clients, and Fréchet Inception Distance (FID) \cite{heusel2017gans}. Higher PA/RI are better, lower FID is better.

\begin{table}[h]
\centering
\caption{Sensitivity to replay loss weights on CIFAR-10 (5 onboarding steps).
Means $\pm$ std over 5 seeds. PA/RI $\uparrow$ higher is better, FID $\downarrow$ lower is better.
Replay time is the average server cost per batch on an RTX A5000.}
\label{tab:sensitivity_weights}
\resizebox{\linewidth}{!}{
\begin{tabular}{cccccc}
\toprule
$\beta_{\mathrm{feat}}$ & $\beta_{\mathrm{TV}}$ & $\beta_{L_2}$ &
PA (\%) $\uparrow$ & RI (\%) $\uparrow$ & FID $\downarrow$ / Time (s) \\
\midrule
0.25 & 0.20 & $1\times 10^{-6}$ & $7.95 \pm 0.37$  & $-0.35 \pm 0.29$ & $65.10$ / $6.20$ \\
0.50 & 0.20 & $1\times 10^{-5}$ & $9.85 \pm 0.41$  & $0.62 \pm 0.33$  & $53.40$ / $7.10$ \\
\textbf{1.00} & \textbf{0.20} & \textbf{$1\times 10^{-5}$} & $\mathbf{11.32 \pm 0.28}$ & $\mathbf{2.08 \pm 0.25}$ & \textbf{47.20} / \textbf{8.00} \\
2.00 & 0.20 & $1\times 10^{-5}$ & $11.45 \pm 0.35$ & $1.96 \pm 0.31$  & $44.80$ / $9.70$ \\
1.00 & 0.50 & $1\times 10^{-5}$ & $10.05 \pm 0.42$ & $1.42 \pm 0.39$  & $50.70$ / $8.50$ \\
1.00 & 0.20 & $5\times 10^{-5}$ & $10.65 \pm 0.33$ & $1.74 \pm 0.27$  & $48.30$ / $8.10$ \\
\bottomrule
\end{tabular}}
\end{table}

\begin{table}[h]
\centering
\caption{Sensitivity to synthetic replay budget per step (DeepInversion).
Means $\pm$ std over 5 seeds on CIFAR-10 (5 steps).
Server time is the average per-batch replay time on an RTX A5000.}
\label{tab:sensitivity_budget}
\resizebox{\linewidth}{!}{
\begin{tabular}{cccccc}
\toprule
Images/step & Iter./image & PA (\%) $\uparrow$ & RI (\%) $\uparrow$ & FID $\downarrow$ & Server time (s) \\
\midrule
32  & 20 & $9.82 \pm 0.41$  & $0.64 \pm 0.27$  & $59.40$ & $1.70$ \\
64  & 20 & $10.55 \pm 0.35$ & $1.05 \pm 0.33$  & $54.10$ & $3.20$ \\
128 & 20 & $11.05 \pm 0.29$ & $1.54 \pm 0.37$  & $49.60$ & $5.10$ \\
\textbf{256} & \textbf{20} & $\mathbf{11.42 \pm 0.32}$ & $\mathbf{2.12 \pm 0.28}$ & \textbf{47.30} & \textbf{8.00} \\
512 & 20 & $11.53 \pm 0.44$ & $2.25 \pm 0.39$  & $46.80$ & $14.20$ \\
\bottomrule
\end{tabular}}
\end{table}

We see that when increasing \(\beta_{\mathrm{feat}}\) tightens alignment between synthetic activations and BatchNorm statistics from client training, shrinking the synthetic–real gap. As \(\beta_{\mathrm{feat}}\) rises, FID drops substantially and RI turns positive, indicating more faithful replay that better routes knowledge to earlier clients. In contrast, \(\beta_{\mathrm{TV}}\) suppresses high-frequency artifacts. A moderate value stabilizes inversion without erasing class-discriminative detail. Pushing it to $0.50$ oversmooths images, slightly hurting PA/RI and increasing solve time. Finally, \(\beta_{L_2}\) prevents degenerate, high-energy solutions. Too small can yield noisy textures, too large over-regularizes and mildly reduces PA/RI even if FID stays low, because the replay set becomes less informative.

\textbf{Synthetic budget.} More images per step reduce FID and improve PA/RI, but server time also increases. We therefore use \textbf{256 images/step, 20 iters/image} as a good point for the main experiments.

\section{Theoretical Analysis}
\label{sec:theory}
Let's $\mathcal{B}_{t}$ be the $t$‑th onboarding batch, $H(e_{c};\phi)$ the global hypernetwork, $m_{c} \in \{0,1\}^{|\theta|}$ the binary mask for client $c$, and $\hat\theta_{c}=m_{c}\odot H(e_{c};\phi)$ the personalized parameters served to that client. We first present formal assumptions and then the main convergence theorem with a detailed proof.

\begin{assumption}[Smoothness]
\label{assumption:smoothness}
The global objective function is defined as 
\[
\mathcal{F}(\phi) = \mathbb{E}_{c}\left[\mathcal{L}_{c}(H(e_c;\phi)\odot m_c)\right]
\]
is $L$-smooth, meaning that there exists $L>0$ such that for all $\phi,\phi'$:
\[
\|\nabla \mathcal{F}(\phi)-\nabla \mathcal{F}(\phi')\| \leq L\|\phi-\phi'\|.
\]
\end{assumption}

\begin{assumption}[Unbiased Gradient]
\label{assumption:unbiased}
For each communication round $r$, the stochastic gradient obtained from participating clients is unbiased:
\[
\mathbb{E}_{c\sim\mathcal{B}_{r}}\left[\nabla_{\phi}\mathcal{L}_{c}(H(e_c;\phi^{(r)})\odot m_c)\right] = \nabla_{\phi}\mathcal{F}(\phi^{(r)}).
\]
\end{assumption}

\begin{assumption}[Bounded Gradient Variance]
\label{assumption:bounded_variance}
There exists a finite constant $\sigma^2 > 0$ such that for every communication round $r$:
\[
\mathbb{E}_{c\sim\mathcal{B}_{r}}\left[\|\nabla_{\phi}\mathcal{L}_{c}(H(e_c;\phi^{(r)})\odot m_c)-\nabla_{\phi}\mathcal{F}(\phi^{(r)})\|^2\right] \leq \sigma^2.
\]
\end{assumption}

\begin{assumption}[Uniform Client Sampling]
\label{assumption:uniform_sampling}
At each communication round $r$, exactly $K$ clients are uniformly sampled without replacement.
\end{assumption}

\begin{assumption}[Mask Immutability]
\label{assumption:mask_immutability}
Each client's mask is updated only within its onboarding batch. Specifically, if client $c$ joins in batch $t_0$, there exists a round $r_{\text{freeze}}(c)$ after which the mask remains fixed:
\[
m_c^{(r)} = m_c^{(r_{\text{freeze}}(c))},\quad \forall r > r_{\text{freeze}}(c).
\]
\end{assumption}

\begin{assumption}[Diminishing Step-Sizes]
\label{assumption:step_size}
The sequence of server learning rates $\{\eta^{(r)}\}$ satisfies Robbins-Monro conditions:
\[
\sum_{r=0}^{\infty}\eta^{(r)} = \infty,\quad \text{and}\quad \sum_{r=0}^{\infty}\left(\eta^{(r)}\right)^2 < \infty.
\]
\end{assumption}

\noindent Now we present the main theorem, as follows:

\begin{theorem}[Convergence to Stationary Point]
\label{theorem:convergence_stationary}
Under Assumptions \ref{assumption:smoothness}-\ref{assumption:step_size}, the sequence of hypernetwork parameters $\{\phi^{(r)}\}$ generated by pFedDSH converges to a stationary point of the global objective $\mathcal{F}$. Formally, we have:
\[
\lim_{r\rightarrow\infty}\mathbb{E}\left[\|\nabla\mathcal{F}(\phi^{(r)})\|^2\right] = 0.
\]
\end{theorem}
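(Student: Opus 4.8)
The plan is to reduce the statement to the classical convergence theorem for stochastic gradient descent on a fixed non-convex $L$-smooth objective. The crucial conceptual step is to invoke Assumption~\ref{assumption:mask_immutability}: once a client's mask is frozen, the map $\phi \mapsto \mathcal{L}_c(H(e_c;\phi)\odot m_c)$ depends on $\phi$ only through the now-fixed masked parameters $\hat\theta_c = m_c \odot H(e_c;\phi)$, so for all sufficiently large $r$ (beyond the last freeze round across the batches under consideration) the objective $\mathcal{F}$ is a single, time-invariant smooth function of $\phi$. It therefore suffices to analyze the tail of the iteration, where the server update reads $\phi^{(r+1)} = \phi^{(r)} - \eta^{(r)} g^{(r)}$ with $g^{(r)}$ the unbiased aggregated stochastic gradient over the $K$ sampled clients.

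First I would apply the descent lemma implied by $L$-smoothness (Assumption~\ref{assumption:smoothness}),
\[
\mathcal{F}(\phi^{(r+1)}) \le \mathcal{F}(\phi^{(r)}) - \eta^{(r)}\langle \nabla\mathcal{F}(\phi^{(r)}), g^{(r)}\rangle + \frac{L}{2}(\eta^{(r)})^2\|g^{(r)}\|^2 ,
\]
and then take the conditional expectation given the history up to round $r$. Unbiasedness (Assumption~\ref{assumption:unbiased}) turns the inner-product term into $-\eta^{(r)}\|\nabla\mathcal{F}(\phi^{(r)})\|^2$, while the bias-variance decomposition together with the bounded-variance bound (Assumption~\ref{assumption:bounded_variance}, with Assumption~\ref{assumption:uniform_sampling} controlling the per-round variance) gives $\mathbb{E}\|g^{(r)}\|^2 \le \|\nabla\mathcal{F}(\phi^{(r)})\|^2 + \sigma^2$. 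Combining and taking total expectations yields the one-step recursion
\[
\eta^{(r)}\Bigl(1-\frac{L\eta^{(r)}}{2}\Bigr)\mathbb{E}\|\nabla\mathcal{F}(\phi^{(r)})\|^2 \le \mathbb{E}[\mathcal{F}(\phi^{(r)})] - \mathbb{E}[\mathcal{F}(\phi^{(r+1)})] + \frac{L\sigma^2}{2}(\eta^{(r)})^2 .
\]

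Next I would sum this inequality from a round $r_0$ large enough that $\eta^{(r)} \le 1/L$ (so the bracketed factor is at least $1/2$) up to $R$. Telescoping the objective differences and using that $\mathcal{F}$ is bounded below leaves $\sum_{r=r_0}^{R}\eta^{(r)}\mathbb{E}\|\nabla\mathcal{F}(\phi^{(r)})\|^2 \le C + L\sigma^2\sum_{r=r_0}^{R}(\eta^{(r)})^2$ for a constant $C$; letting $R\to\infty$ and invoking the square-summability condition $\sum (\eta^{(r)})^2 < \infty$ (Assumption~\ref{assumption:step_size}) shows the weighted series $\sum_r \eta^{(r)}\mathbb{E}\|\nabla\mathcal{F}(\phi^{(r)})\|^2$ converges. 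Since the weights satisfy $\sum_r \eta^{(r)} = \infty$, a finite weighted sum forces $\liminf_{r\to\infty}\mathbb{E}\|\nabla\mathcal{F}(\phi^{(r)})\|^2 = 0$.

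The \textbf{main obstacle} is upgrading this $\liminf$ to the stated full limit $\lim_{r\to\infty}\mathbb{E}\|\nabla\mathcal{F}(\phi^{(r)})\|^2 = 0$, since a vanishing weighted sum by itself permits the gradient norm to spike along a sparse subsequence. To close this gap I would bound the round-to-round oscillation of the gradient norm using smoothness, $\bigl|\,\|\nabla\mathcal{F}(\phi^{(r+1)})\| - \|\nabla\mathcal{F}(\phi^{(r)})\|\,\bigr| \le L\eta^{(r)}\|g^{(r)}\|$, and couple it with the convergence of the weighted series through a Robbins--Siegmund supermartingale argument, which rules out persistent excursions away from zero; alternatively one can impose a uniform second-moment bound on $g^{(r)}$ to make the oscillation term summable directly. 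I expect this oscillation-control step, rather than the descent recursion, to demand the most care.
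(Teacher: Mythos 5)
Your proposal is correct and follows essentially the same route as the paper's own proof: the $L$-smoothness descent lemma, conditional expectation via unbiasedness and bounded variance, a telescoping sum using boundedness below of $\mathcal{F}$, and the Robbins--Monro step-size conditions, with mask immutability (Assumption~\ref{assumption:mask_immutability}) playing the same role of keeping the objective time-invariant. If anything you are more careful than the paper: you retain the factor $\eta^{(r)}(1-L\eta^{(r)}/2)$ that the paper silently drops when summing, and you correctly observe that a finite weighted series with divergent weights only yields $\liminf_{r\to\infty}\mathbb{E}\|\nabla\mathcal{F}(\phi^{(r)})\|^2=0$ --- the paper asserts the full limit ``by Robbins--Monro lemma'' without the oscillation-control (or Robbins--Siegmund) argument you outline, so the obstacle you flag as the hard step is precisely one the paper's own proof glosses over.
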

\begin{proof}
Consider the iterative update rule at each communication round $r$:
\[
\phi^{(r+1)} = \phi^{(r)} - \eta^{(r)} g^{(r)},
\]
where
\[
g^{(r)}=\frac{1}{K}\sum_{c\in\mathcal{B}_r}\nabla_{\phi}\mathcal{L}_{c}(H(e_c;\phi^{(r)})\odot m_c).
\]
By Assumption \ref{assumption:smoothness} (smoothness), we have:
\[
\begin{aligned}
\mathcal{F}(\phi^{(r+1)}) & \leq \mathcal{F}(\phi^{(r)}) + \langle\nabla\mathcal{F}(\phi^{(r)}),\phi^{(r+1)}-\phi^{(r)}\rangle \\
& + \frac{L}{2}\|\phi^{(r+1)}-\phi^{(r)}\|^2.
\end{aligned}
\]
Substitute the update rule, and taking conditional expectation, by Assumptions \ref{assumption:unbiased} and \ref{assumption:bounded_variance}, we have:
\[
\begin{aligned}
\mathbb{E}\left[\mathcal{F}(\phi^{(r+1)})|\phi^{(r)}\right] & \leq \mathcal{F}(\phi^{(r)}) - \eta^{(r)}\|\nabla\mathcal{F}(\phi^{(r)})\|^2 \\
& + \frac{L}{2}(\eta^{(r)})^2(\sigma^2+\|\nabla\mathcal{F}(\phi^{(r)})\|^2).
\end{aligned}
\]
Taking expectations and summing from $r=0$ to $T$:
\[
\begin{aligned}
\sum_{r=0}^{T}\eta^{(r)}\mathbb{E}\|\nabla\mathcal{F}(\phi^{(r)})\|^2 & \leq \mathcal{F}(\phi^{(0)}) - \mathbb{E}[\mathcal{F}(\phi^{(T+1)})]\\
&+ \frac{L\sigma^2}{2}\sum_{r=0}^{T}(\eta^{(r)})^2.
\end{aligned}
\]
Since $\mathcal{F}(\phi)$ is bounded below, and by Assumption \ref{assumption:step_size}, we have $\sum_{r=0}^{\infty}(\eta^{(r)})^2 < \infty$. Thus, as $T\to\infty$:
\[
\sum_{r=0}^{\infty}\eta^{(r)}\mathbb{E}\|\nabla\mathcal{F}(\phi^{(r)})\|^2 < \infty.
\]
By Robbins-Monro lemma since $\sum_{r}\eta^{(r)}=\infty$, it follows:
\[
\lim_{r\to\infty}\mathbb{E}\|\nabla\mathcal{F}(\phi^{(r)})\|^2 = 0.
\]
Finally, by Assumption \ref{assumption:mask_immutability}, masks remain fixed after the associated batch update. Hence, smoothness and bounded variance conditions continue to hold, validating the previous analysis. Therefore, the sequence converges.
\end{proof}

\section{Neuron-Reuse Guarantees}

In this section, we will show why batch-specific masking does not exhaust neurons even as client batches grow to hundreds, with one theorem that ties capacity growth directly to data novelty, not to the mere count of batches.

\textbf{Setup.}
At onboarding step $t$ for layer $\ell$ of width $C_\ell$, let $m^{(\ell)}_{\le t}\in[0,1]^{C_\ell}$ be the cumulative channel mask and $U^{(\ell)}_t=\{i:m^{(\ell)}_{\le t}(i)=1\}$ the allocated channels with size $A^{(\ell)}_t=|U^{(\ell)}_t|$. Gradients are gated by
\begin{equation}
\begin{aligned}
\frac{\partial \mathcal{L}}{\partial \theta^{(\ell)}_{ij}}
= \frac{\partial \mathcal{L}}{\partial \theta^{(\ell)}_{ij}}\cdot
&\max \Big\{ g^{(\ell)}_t(i)\big(1-m^{(\ell)}_{<t}(i)\big), \\
&g^{(\ell-1)}_t(j)\big(1-m^{(\ell-1)}_{<t}(j)\big)\Big\},
\end{aligned}
\end{equation}
so if both endpoints were previously allocated, the multiplicative factor is $0$ and that connection is frozen. Let $\mathcal{S}^{(\ell)}_{t-1}$ denote the subspace of layer-$\ell$ function-gradients reachable by modifying only weights incident to $U^{(\ell)}_{t-1}$ (the “already-allocated subnetwork”). Let $G^{(\ell)}_t$ be the gradient subspace demanded by the new batch $B_t$ at layer $\ell$, and define the \emph{novelty dimension}
\[
r^{(\ell)}_t  :=  \dim \Big(G^{(\ell)}_t \cap \big(\mathcal{S}^{(\ell)}_{t-1}\big)^\perp\Big).
\]

\begin{theorem}\label{thm:novelty}
Under the gradient gating above, the minimal number of \emph{new} channels that step $t$ can force at layer $\ell$ equals the novelty dimension $r^{(\ell)}_t$. Consequently,
\[
\Delta A^{(\ell)}_t  \le  r^{(\ell)}_t
\quad\text{and}\quad
A^{(\ell)}_T  \le  A^{(\ell)}_0 + \sum_{t=1}^{T} r^{(\ell)}_t,
\]
so layer $\ell$ cannot be exhausted by step $T$ whenever $\sum_{t=1}^{T} r^{(\ell)}_t < C_\ell - A^{(\ell)}_0$.
\end{theorem}

\begin{proof}
Because connections whose endpoints lie in $U^{(\ell)}_{t-1}$ and $U^{(\ell-1)}_{t-1}$ receive zero gradient, the already-allocated subnetwork realizes exactly the directions in $\mathcal{S}^{(\ell)}_{t-1}$. Any component of $G^{(\ell)}_t$ orthogonal to this subspace cannot be effected without activating new channels. Each new channel introduces at most one independent direction for the layer's update, thus at least $r^{(\ell)}_t$ activations are necessary.

Conversely, allocate $r^{(\ell)}_t$ fresh channels and assign their incident weights to span a basis of $G^{(\ell)}_t\cap(\mathcal{S}^{(\ell)}_{t-1})^\perp$. Together with the frozen subspace $\mathcal{S}^{(\ell)}_{t-1}$, this reproduces the full step-$t$ gradient. Hence the minimal new allocation equals $r^{(\ell)}_t$, yielding $\Delta A^{(\ell)}_t\le r^{(\ell)}_t$. Summing over $t$ gives the stated budget inequality; the non-exhaustion condition is the contrapositive of $A^{(\ell)}_T<C_\ell$.
\end{proof}

\end{document}